\theoremstyle{definition}
\newtheorem{proposition}{Proposition}
\title{Fine-tuning of Language Models with Discriminator}
\author{Vadim Popov \& Mikhail Kudinov
\\
Samsung R\&D Institute Russia\\
Moscow, Russia \\
\texttt{v.popov@samsung.com,m.kudinov@samsung.com}
}
\begin{document}

\maketitle

\begin{abstract}
Cross-entropy loss is a common choice when it comes to multiclass classification tasks and language modeling in particular. Minimizing this loss results in language models of very good quality. We show that it is possible to fine-tune these models and make them perform even better if they are fine-tuned with sum of cross-entropy loss and reverse Kullback-Leibler divergence. The latter is estimated using discriminator network that we train in advance. During fine-tuning probabilities of rare words that are usually underestimated by language models become bigger. The novel approach that we propose allows us to reach state-of-the-art quality on Penn Treebank: perplexity decreases from $52.4$ to $52.1$. Our fine-tuning algorithm is rather fast, scales well to different architectures and datasets and requires almost no hyperparameter tuning: the only hyperparameter that needs to be tuned is learning rate.
\end{abstract}

\section{Introduction}\label{intro}

Normally, when it is necessary to fine-tune a model on the same dataset that was used for initial training of this model, various optimization tricks are applied. For instance, one may think about changing optimizer type (in papers by \cite{AWD_LSTM}, \cite{MoS} and \cite{DOC_LM} language models that reach state-of-the-art performance are fine-tuned using Averaged Stochastic Gradient Descent (\cite{ASGD})). It is also possible to use complex learning rate schedules: for example, to increase learning rate for several epochs (it can help to get out of local minima or saddle points) or to use cyclical learning rates (\cite{cyclical_lr}).

In our paper we describe another way that can help in fine-tuning models that are designed to solve multiclass classification tasks by minimizing cross-entropy loss. We show that it is possible to use new loss function (namely, sum of cross-entropy loss and reverse Kullback-Leibler divergence) and fine-tune the initial model to optimize this new loss. We prove that such fine-tuning allows to reduce probabilities of words that for some reason were overestimated during initial training and demonstrate that probabilities of underestimated words can increase with almost no negative effect on model performance on other words.

The described scheme needs careful estimation of reverse Kullback-Leibler divergence (KL-divergence). For this purpose we first train discriminator neural network that has the same architecure as the inital model. Its outputs help to estimate new loss that is used for fine-tuning. Although this approach was inspired by generative adversarial networks (GANs by \cite{GAN}), our algorithm has very little in common with this type of generative networks. We do not fine-tune the initial language model to ``fool'' the discriminator -- the discriminator is only used to estimate reverse KL-divergence. Loss functions for language model and discriminator are very different in contrast with GANs where generator and discriminator share the same value function and play minimax game. Also, our fine-tuning procedure is plain and stable. We need to train discriminator till optimality and then fine-tune the initial model till optimality, and it's sufficient to do this only once.

Our main contributions are: 1) we propose a novel approach of fine-tuning language models by adding a special term to standard cross-entropy loss function, 2) we achieve state-of-the-art result on a popular language modeling benchmark Penn Treebank (PTB, \cite{PTB}), 3) we show that our fine-tuning algorithm is relatively fast, easy to train and applicable to large-scale datasets, 4) we expalin theory that lies behind the optimization scheme that we propose.

\section{Fine-tuning of language models}\label{fine_tune_general}

Standard language modeling task is to predict the next word $w_N$ given its left context $c = \{w_1\dots w_{N-1}\}$. One of the most effective ways to do that is to train a recurrent neural network, usually LSTM (\cite{LSTM}) or GRU (\cite{GRU}), to minimize cross-entropy loss between real data distribution $p$ and distribution generated by the neural network $q$:
\begin{equation}\label{eq:ce_loss}
CE(p||q) = -\sum_{w \in W}{p(w)\log q(w)},
\end{equation}
where $W$ is the vocabulary of all possible words language model can predict.
Cross-entropy is widely used because this loss function is the expectation under probability measure $p$ of negative log-likelihood of $q$. So, minimizing it makes sense from the point of view of statistics and, which is also very important, cross-entropy is easy to estimate -- we can just sample from $p$ and evaluate negative log-likelihood of $q$ on these samples.

Minimizing cross-entropy is equivalent to minimizing KL-divergence because they differ by a constant term (by entropy of real data distribution $p$):
\begin{equation}\label{eq:KL_div}
KL(p||q) = \sum_{w \in W}{p(w)\log \frac{p(w)}{q(w)}}
\end{equation}

Cross-entropy and KL-divergence are known to be asymetric -- since they both are expectations of some expressions under measure $p$, they pay less attention to the regions where $p$ is very small. For language modeling it means that probabilities under measure $q$ of very rare words (and also of some frequent words that appear very rarely in certain contexts) may be incorrect.

This drawback can be handled with if we consider reverse KL-divergence, i.e. KL-divergence between neural network distribution $q$ and real data distribution  $p$:
\begin{equation}\label{eq:KL_rev}
KL(q||p) = \sum_{w \in W}{q(w)\log \frac{q(w)}{p(w)}}
\end{equation}

Along with correcting the mentioned above issue we still want good statistical properties of the language model (i.e. we want likelihood of $q$ to be high). This leads to the following scheme: 1) we train a language model to minimize $CE(p||q)$, 2) then we fine-tune the language model to minimize $CE(p||q) + KL(q||p)$. 

However, in order to implement the second step we need to estimate reverse KL-divergence. The problem is that it is expectation under $q$ of some expression depending on $p$ which is intractable -- the only thing we can do with $p$ is to sample from it. 

This discussion resembles the one about GANs (e.g. by \cite{WGAN}). So, we decided to borrow the idea of training a discriminator from the paper where GANs were first introduced (\cite{GAN}).

\subsection{Training discriminator}\label{discriminator_general}

The key idea is that if we have a well-trained discriminator, we can then estimate $p$ in reverse KL-divergence since we know (see Proposition 1 in \cite{GAN}) that the optimal discriminator output distribution is a rational expression of $p$ and $q$. The same idea has recently been used for generator samples selection in GANs (\cite{Discriminator}).

We consider language modeling task, so the distributions $p$ and $q$ that we spoke about previously are actually conditional probability distributions $p(\cdot|c)$ and $q(\cdot|c)$ -- probabilities of the next word given its left context $c$. So, we want our discriminator to output \textit{conditional} probability that the next word comes from neural network distribution $q$. Let us denote this distribution by $r_{\varphi}(\cdot |c)$ where $\varphi$ are the parameters of the discriminator. The architecture of discriminator is exaclty the same as the architecture of the language model, only the interpretation of logits change -- in the language model softmax function is applied to them to get the probability of the next word $q(\cdot |c)$ whereas in the discriminator we apply sigmoid function to the logits to get the probability $r_{\varphi}(\cdot |c)$ that the next word was generated by the language model rather than came from real distribution $p(\cdot|c)$.

Loss function for discriminator $D(c,\varphi)$ is the same as the one used in standard GANs and is quite naturally given by expressions
\begin{equation}\label{eq:disc_loss_general}
D(c,\varphi) = D^{(q)}(c,\varphi) + D^{(p)}(c,\varphi),
\end{equation}
\begin{equation}\label{eq:disc_loss_detail}
D^{(q)}(c,\varphi) = -\mathbb{E}_{w\sim q_{0}(\cdot |c)}[\log{r_{\varphi}(w|c)}], \ \ D^{(p)}(c,\varphi) = -\mathbb{E}_{w\sim p(\cdot |c)}[\log(1-{r_{\varphi}(w|c)})],
\end{equation}

where $q_{0}$ is the probability distribution given by the initial language model (well-trained to minimize cross-entropy loss). We put zero subscript here to underline that the parameters of this language model are fixed. We will stick to this notation for the rest of the paper.

In practice, for any real context $c$ and any true next word $w^{*}$ from the training corpus we will estimate this loss by the following expression:
\begin{equation}\label{eq:disc_loss_eval}
\hat{D}(c,w^{*},\varphi) = -\sum_{w\in W}{q_{0}(w|c)\log{r_{\varphi}(w|c)}}  -\log{(1-r_{\varphi}(w^{*}|c))}
\end{equation}
 
As for left contexts $c$, we train discriminator only on examples of real contexts, so the overall loss function $D(\varphi)$ that we actually minimize is given by $D(\varphi)=\mathbb{E}_{c\sim p(\cdot)}[D(c,\varphi)]$.

It is worth mentioning that discriminator has to decide whether the next word comes from $p$ or $q$ given not only its left context $c$ (which is clear from notation) but also given the knowledge that this context $c$ comes from real distribution $p$. It becomes clear when we think of how we train this discriminator -- we always observe only real contexts $c$ while training. We design discriminator training in this manner because our final goal is to fine-tune language model whereas discriminator plays an auxiliary role of delivering an approximation to real data distribution $p(\cdot |c)$, and we need to estimate this distributiuon only for real contexts $c$.

\subsection{Fine-tuning with discriminator}\label{finetuner_general}

If the discrimator was trained till optimality, the following equation holds (see proof in \cite{GAN}):
\begin{equation}\label{eq:disc_expression}
r(w|c)=\frac{q_{0}(w|c)}{q_{0}(w|c)+p(w|c)}
\end{equation}
We omit the subscript $\varphi$ here for simplicity. From this equation we can get an estimation for real data distribution:
\begin{equation}\label{eq:p_expression}
\hat{p}(w|c)=q_{0}(w|c)\frac{1-r(w|c)}{r(w|c)}
\end{equation}

Language model loss during fine-tuning is given by
\begin{equation}\label{eq:ftd_loss}
L(c, \theta) = CE(p(\cdot |c)||q_{\theta}(\cdot |c)) + KL(q_{\theta}(\cdot |c)||p(\cdot |c)),
\end{equation}

where $\theta$ are parameters of the fine-tuned model. Cross-entropy term is calculated as usual. As for reverse KL-divergence, we can calculate it using the following identity:
\begin{equation}\label{eq:rev_KL_identity}
KL(q_{\theta}(\cdot |c)||p(\cdot |c))=\mathbb{E}_{w\sim q_{\theta}(\cdot |c)}\left[\log{\frac{q_{\theta}(w|c)}{p(w|c)}}\right]=\mathbb{E}_{w\sim p(\cdot |c)}\left[\frac{q_{\theta}(w|c)}{p(w|c)}\log{\frac{q_{\theta}(w|c)}{p(w|c)}}\right]
\end{equation}
Plugging in the estimation for $p$ from (\ref{eq:p_expression}) into expectation with respect to $p$ from (\ref{eq:rev_KL_identity}), we obtain estimation of total loss for any real context $c$ and true next word $w^{*}$:
\begin{equation}\label{eq:ftd_loss_eval_t}
t(c,w,\theta) = \frac{q_{\theta}(w|c)}{q_{0}(w|c)}\cdot \frac{r(w|c)}{1-r(w|c)}
\end{equation}
\begin{equation}\label{eq:ftd_loss_eval}
\hat{L}(c,w^{*},\theta)=-\log{q_{\theta}(w^{*}|c)}+t(c,w^{*},\theta)\log{t(c,w^{*},\theta)}
\end{equation}

As in the case of discriminator training, the overall loss function $L(\theta)$ that we actually minimize during model fine-tuning is given by $L(\theta)=\mathbb{E}_{c\sim p(\cdot)}[L(c,\theta)]$.

Interestingly, there is an alternative way to think of how we get estimation (\ref{eq:p_expression}). This estimation is actually the result of training a language model with Noise Contrastive Estimation (NCE, \cite{NCE}) loss without normalization constraint. It's easy to see that if we take the initial model distribution $q_0$ as noise distribution, then training a language model with NCE loss is equivalent to training discriminator as described in section \ref{discriminator_general} and obtaining $\hat{p}$ from its logits by formula (\ref{eq:p_expression}).

\section{Analysis}\label{analysis}

\begin{figure}[h]
\begin{minipage}[h]{0.49\linewidth}
\center{\includegraphics[width=1.0\linewidth]{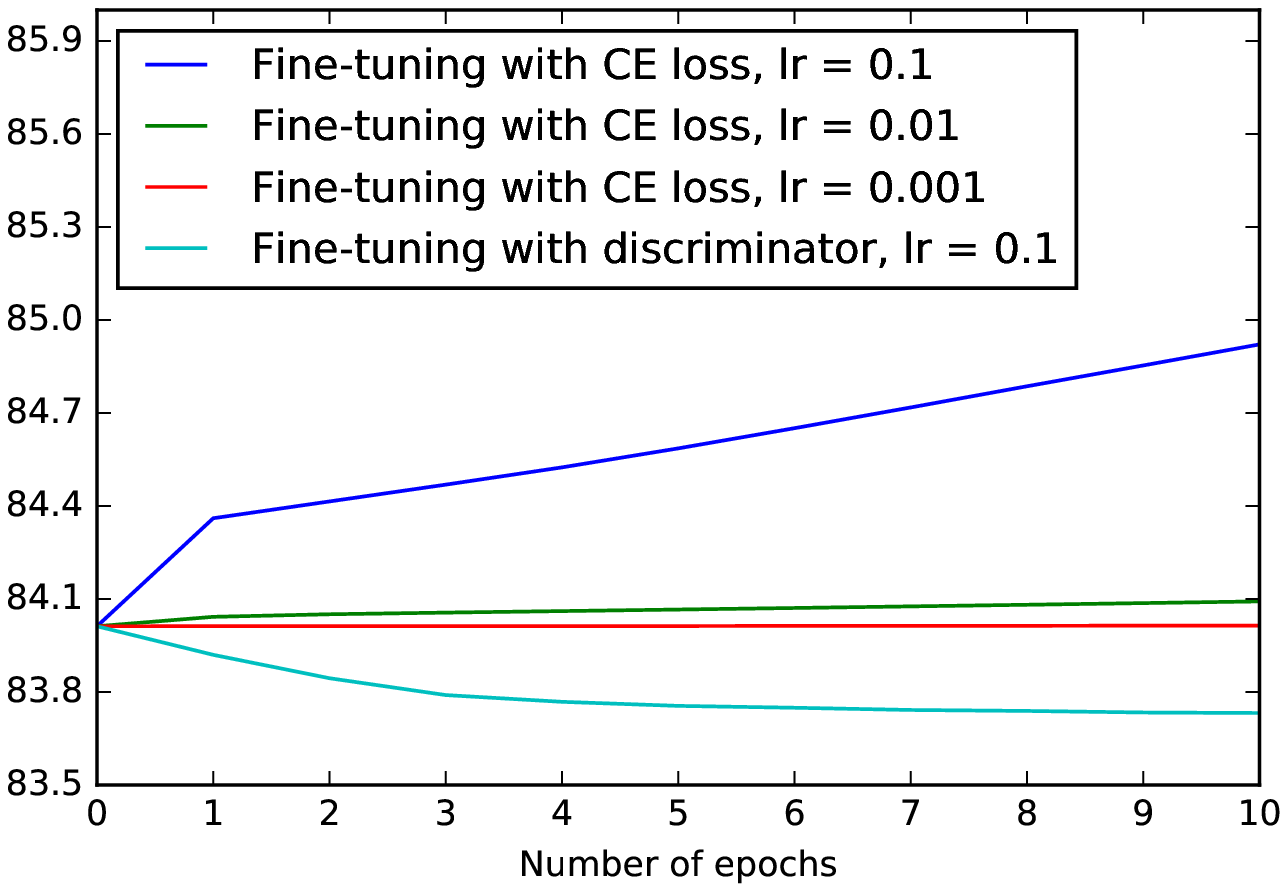} \\ }
\end{minipage}
\hfill
\begin{minipage}[h]{0.49\linewidth}
\center{\includegraphics[width=1.0\linewidth]{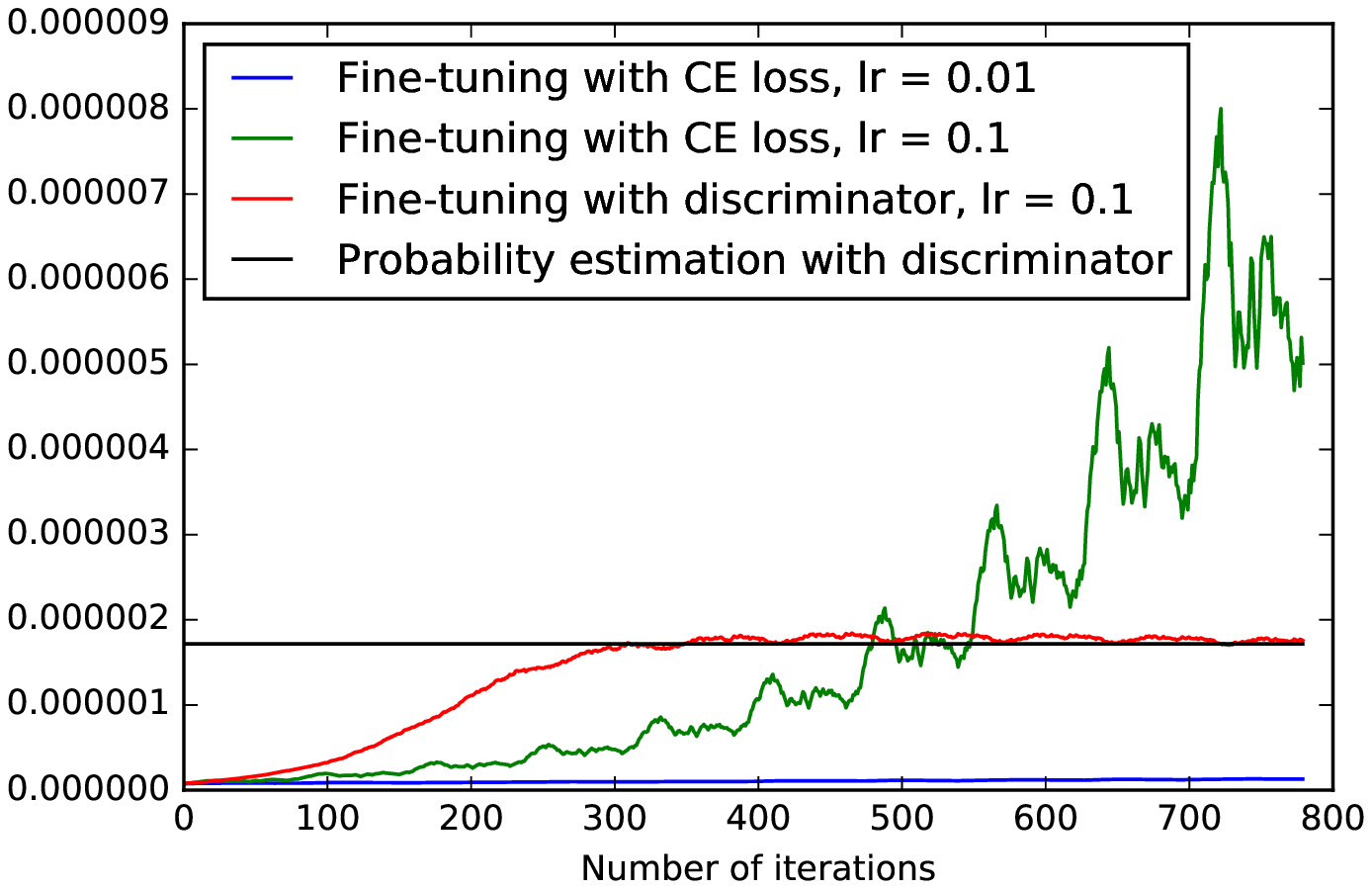} \\ }
\end{minipage}
\caption{Fine-tuning curves (synthetic data). The left plot shows validation perplexity. The right plot shows probability of the chosen word on each SGD iteration (given some fixed context).}
\label{pic:saddle}
\end{figure}

We analyze and illustrate basic properties of the proposed fine-tuning scheme on a synthetic dataset because in this case we know true probability distribution $p$ for sure. The structure of this dataset is much simpler than the one of natural language but it's sufficient for the purposes of this section.

We generate trigram probabilities according to a certain rule so that we have quite frequent words and quite rare words. Then we sample from this trigram distribution and get training, validation and test sets. Details of generating trigram probabilities, some dataset statistics, model architecture and also training and fine-tuning parameters are not of great importance; they are given in appendix \ref{artdataset}.

First, LSTM-based language model is trained till optimality. We'll refer to this model as the ``initial model''. Next, we choose a word and set the bias coefficient before the output softmax layer corresponding to this word to a very large negative value. Modifying this coefficient in the described manner makes the model assign extremely low probabilities to the chosen word in any context. We will refer to the model with modified coefficient as the ``perturbed model'' and we'll call the word corresponding to the modified coefficient the ``chosen word''.

Table \ref{tab:imbalance} shows the results of fine-tuning of the initial and perturbed models with our method. In the current experiment, by frequent we mean top $50$ most frequent words from training corpus; rare words are the remaining $950$ words. We will also say that a word is almost frequent if it is in the top $100$ most frequent words among rare words. The chosen word (the one whose bias we modified) belongs to almost frequent words. To measure imbalance between frequent and rare words we compute the ratio between perplexity calculated on the test set only for frequent words and only for rare words. We also calculate mean and standard deviation of differences between log-likelihoods of model predictions $q$ and true probabilities $p$.

Table \ref{tab:imbalance} gives evidence that generally language models tend to overestimate frequent words and underestimate rare words -- values in ``Ratio'' column for all language models are higher than the one for true distribution. Large values of standard deviation in the last column also underline that.

\begin{table}[t]
\caption{Fine-tuning results (synthetic data). Mark ``ftd.'' stands for the fine-tuned model. ``Ratio'' equals rare words perplexity divided by frequent words perplexity. The last column shows the difference between log-likelihoods of $q$ and $p$ (mean $\pm$ standard deviation calculated on the test set).}
\begin{center}
\begin{tabular}{|c|c|c|c|c|c|c|c|}
\hline
Estimating with &Test ppl. &Freq. ppl. &Rare ppl. &Ratio &$\log{q}-\log{p}$\\ \hline
True distribution $p$ &$78.14$ &$29.74$ &$766.53$ &$25.8$ &$0$\\
\hline
Initial model &$84.89$ &$31.27$ &$899.76$ &$28.8$ &$-0.083\pm 0.445$\\
\hline
Initial model ftd.&$84.88$ &$31.30$ &$897.30$ &$28.7$ &$-0.083\pm 0.445$\\
\hline
Perturbed model &$86.01$ &$31.24$ &$942.71$ &$30.2$ &$-0.096\pm 0.591$\\
\hline
Perturbed model ftd.&$85.63$ &$31.27$ &$926.78$ &$29.6$ &$-0.092\pm 0.521$\\
\hline
\end{tabular}
\end{center}
\label{tab:imbalance}
\end{table}

\begin{figure}[h]
\begin{minipage}[h]{0.49\linewidth}
\center{\includegraphics[width=1.0\linewidth]{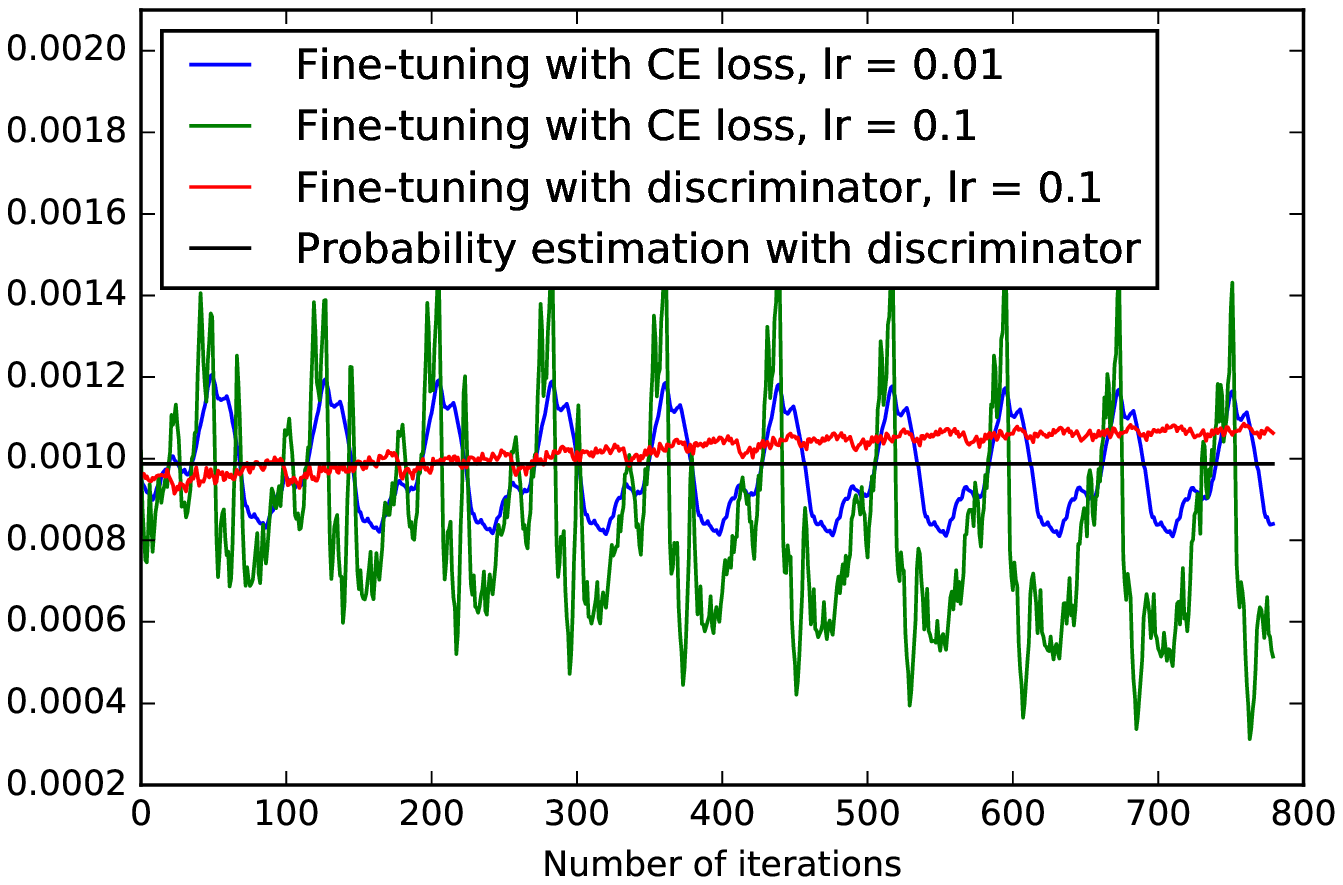} \\ }
\end{minipage}
\hfill
\begin{minipage}[h]{0.49\linewidth}
\center{\includegraphics[width=1.0\linewidth]{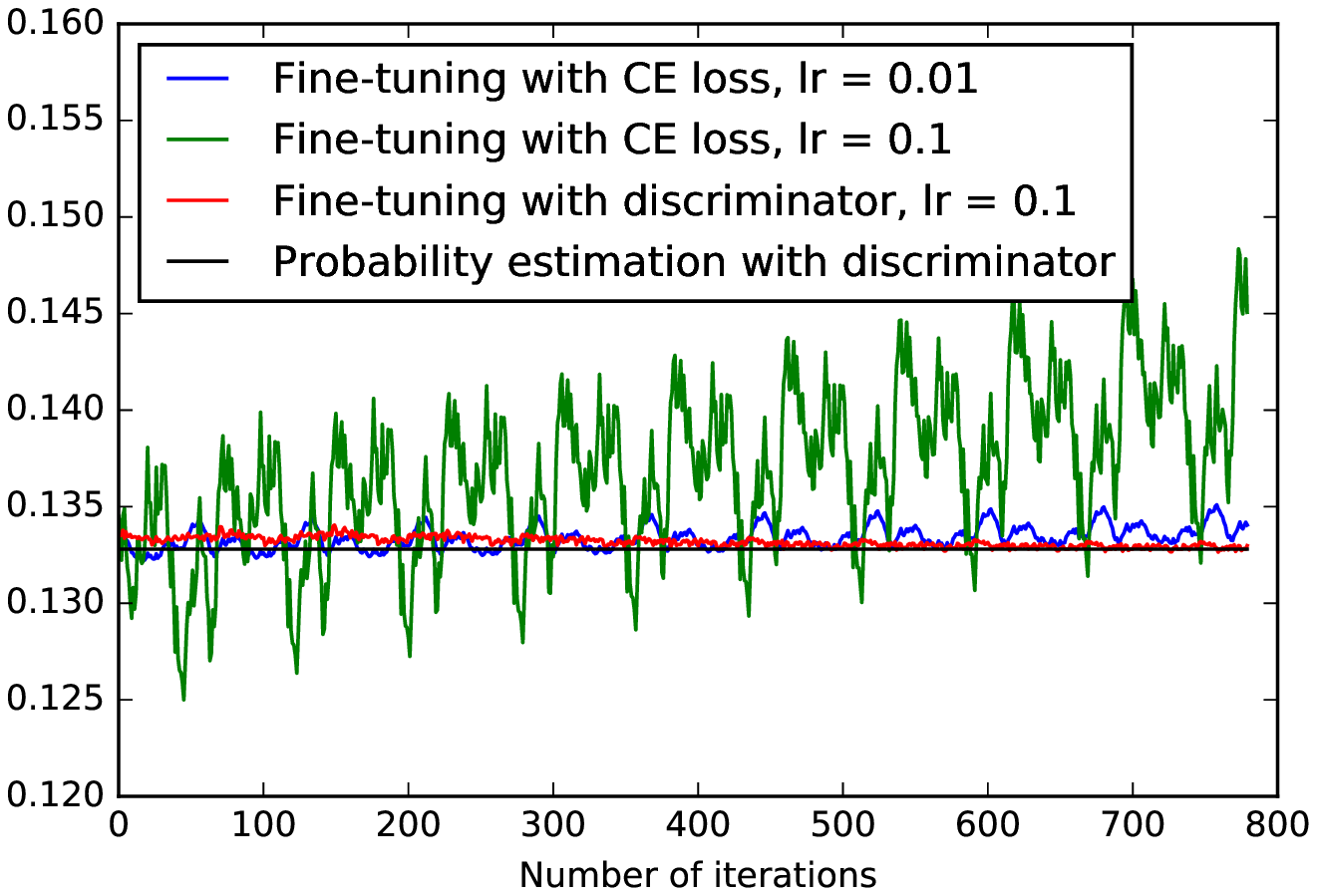} \\ }
\end{minipage}
\caption{Fine-tuning curves (synthetic data). The left and the right plot show probabilities of an almost frequent and a frequent word on each SGD iteration correspondignly (given some fixed contexts).}
\label{pic:freq_words}
\end{figure}

As for fine-tuning with discriminator, the initial model doesn't benefit significantly from it. But even in this case we can observe that fine-tuning acts exactly as we expected: it reduces the imbalance between frequent and rare words. However, this imbalance is still far from the one for real data distribution. 

Fine-tuning of the perturbed model works much better and is illustrated by Figure \ref{pic:saddle}. It's interesting to note that the perturbed model can't be fine-tuned with cross-entropy loss even though this model was obtained by modifying one of the parameters of the initial model rather than by some training process. To get an idea of why our fine-tuning algorithm works in this case we can look at how probabilities of different words change during different fine-tuning procedures (see Figures \ref{pic:saddle}, \ref{pic:freq_words}).

First of all, if we calculate perplexity given by the model fine-tuned with discriminator on all words except the chosen one, it doesn't change and stays at the level $84.56$. So, fine-tuning with discriminator performs well mostly due to its ability to increase probability of the chosen word. Figure~\ref{pic:saddle} demonstrates that standard fine-tuning can significantly increase this probability only for sufficiently large learning rates (of order $10^{-1}$) while small learning rates (of order $10^{-2}$) lead to almost the same probability as before fine-tuning. But the same figure shows that large learning rates get the model very far from the stationary point (most likely saddle point) it is already in. Moreover, Figure \ref{pic:freq_words} demonstrates that for large learning rates word probabilities tend to oscillate much which may also have negative impact on fine-tuning. So, during fine-tuning with cross-entropy loss we can't increase probabilitiy of the underestimated word and keep performing well and stable on all other words at the same time. On the contrary, fine-tuning with discriminator is able to increase probability of the chosen word approximately up to the level given by the estimation $\hat{p}$ avoiding harmful oscillation. The following proposition explains why it is so. 

\begin{proposition}\label{prop}
Let $\hat{p}=\hat{p}(w^{*}|c)$, \ $q_{\theta}=q_{\theta}(w^{*}|c)$, \ $\textbf{v}=\nabla_{\theta} q_{\theta}$, \ $\varepsilon>0$. Then 
\begin{equation}\label{eq:total_grad}
\nabla_{\theta}(-\log{q_{\theta}})=-\frac{1}{q_{\theta}}\cdot \textbf{v}, \ \ \ \nabla_{\theta}\left(\frac{q_{\theta}}{\hat{p}}\log{\frac{q_{\theta}}{\hat{p}}}\right)=\left(\frac{1}{\hat{p}}\left(1+\log{\frac{q_{\theta}}{\hat{p}}}\right)\right)\cdot \textbf{v}
\end{equation}
\begin{equation}\label{eq:ce_part_grad}
q_{\theta}=\hat{p}-\varepsilon \implies \nabla_{\theta}\left(-\log{q_{\theta}}+\frac{q_{\theta}}{\hat{p}}\log{\frac{q_{\theta}}{\hat{p}}}\right)=\left(-\frac{2\varepsilon}{\hat{p}^{2}}-\frac{3}{2}\frac{\varepsilon^{2}}{\hat{p}^{3}}+o(\varepsilon^{2})\right)\cdot \textbf{v}\end{equation}
\begin{equation}\label{eq:klrev_part_grad}
q_{\theta}=\hat{p}+\varepsilon \implies \nabla_{\theta}\left(-\log{q_{\theta}}+\frac{q_{\theta}}{\hat{p}}\log{\frac{q_{\theta}}{\hat{p}}}\right)=\left(\frac{2\varepsilon}{\hat{p}^{2}}-\frac{3}{2}\frac{\varepsilon^{2}}{\hat{p}^{3}}+o(\varepsilon^{2})\right)\cdot \textbf{v}
\end{equation}
\end{proposition}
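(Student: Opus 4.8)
The plan is to treat the expression $-\log q_\theta + \frac{q_\theta}{\hat p}\log\frac{q_\theta}{\hat p}$ as a function of the single scalar $q_\theta = q_\theta(w^{*}|c)$, using the crucial fact that $\hat p = \hat p(w^{*}|c)$ is built from the \emph{fixed} initial model $q_0$ and the already-trained discriminator output $r$ via (\ref{eq:p_expression}), so it carries no dependence on $\theta$ and is a constant under $\nabla_\theta$. Consequently every gradient in the statement is a scalar multiple of $\textbf{v} = \nabla_\theta q_\theta$. I would also note at the outset that after substituting (\ref{eq:p_expression}) into (\ref{eq:ftd_loss_eval_t}) one gets $t(c,w,\theta) = q_\theta(w|c)/\hat p(w|c)$, so the quantity differentiated in (\ref{eq:ce_part_grad})--(\ref{eq:klrev_part_grad}) is precisely the per-example fine-tuning loss $\hat L(c,w^{*},\theta)$ of (\ref{eq:ftd_loss_eval}); the proposition thus just describes its gradient.

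First I would establish (\ref{eq:total_grad}) by the chain rule: $\nabla_\theta(-\log q_\theta) = -(1/q_\theta)\textbf{v}$ directly, and for the second term set $u = q_\theta/\hat p$, so $\nabla_\theta u = (1/\hat p)\textbf{v}$, and apply $\tfrac{d}{du}(u\log u) = 1 + \log u$ to obtain $\nabla_\theta(u\log u) = \tfrac{1}{\hat p}\big(1 + \log\tfrac{q_\theta}{\hat p}\big)\textbf{v}$. Adding the two shows the gradient of the combined loss equals $g(q_\theta)\textbf{v}$, where
\[
g(q_\theta) \;=\; -\frac{1}{q_\theta} \;+\; \frac{1}{\hat p}\Big(1 + \log\frac{q_\theta}{\hat p}\Big).
\]
The structural point to highlight is that $g(\hat p) = -\tfrac1{\hat p} + \tfrac1{\hat p}(1+\log 1) = 0$: the per-example loss is stationary exactly at $q_\theta = \hat p$, which is the reason fine-tuning drives the word probability toward the estimate $\hat p$. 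Equations (\ref{eq:ce_part_grad}) and (\ref{eq:klrev_part_grad}) then simply record the sign and magnitude of $g$ just below and just above $\hat p$.

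For those two, I would write $q_\theta = \hat p + \delta$ and expand $g$ in $\delta$ using $\frac{1}{\hat p+\delta} = \frac{1}{\hat p}\big(1 - \frac{\delta}{\hat p} + \frac{\delta^{2}}{\hat p^{2}} - \cdots\big)$ and $\log(1+\delta/\hat p) = \frac{\delta}{\hat p} - \frac{\delta^{2}}{2\hat p^{2}} + \cdots$. The constant terms cancel (restating $g(\hat p)=0$), the linear terms combine to $2\delta/\hat p^{2}$, and the quadratic terms combine to $-\tfrac32\,\delta^{2}/\hat p^{3}$ (the $-1/\hat p^{3}$ from the geometric series plus the $-1/(2\hat p^{3})$ from the logarithm). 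Hence $g(\hat p+\delta) = \frac{2\delta}{\hat p^{2}} - \tfrac32\frac{\delta^{2}}{\hat p^{3}} + o(\delta^{2})$; putting $\delta = -\varepsilon$ gives (\ref{eq:ce_part_grad}) and $\delta = +\varepsilon$ gives (\ref{eq:klrev_part_grad}).

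There is essentially no obstacle here: the argument is elementary calculus once one observes that $\hat p$ is $\theta$-independent and that all the content is in the scalar coefficient $g$. The only thing needing a bit of care is the second-order bookkeeping, namely summing the $-1/\hat p^{3}$ contribution of the $1/q_\theta$ expansion with the $-1/(2\hat p^{3})$ contribution of the logarithm to get the coefficient $-3/2$ rather than $-1$; the $o(\varepsilon^{2})$ term is simply the Taylor remainder of the analytic function $g$ near the fixed point $\hat p \in (0,1)$ and needs no further justification.
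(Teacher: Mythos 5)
Your proposal is correct and follows essentially the same route as the paper: both reduce everything to the scalar coefficient $-\frac{1}{q_{\theta}}+\frac{1}{\hat{p}}\bigl(1+\log\frac{q_{\theta}}{\hat{p}}\bigr)$ of $\textbf{v}$ and Taylor-expand it around $q_{\theta}=\hat{p}$, obtaining the same cancellation of constant terms and the same $-\frac{3}{2}$ second-order coefficient. Your only (harmless) streamlining is handling both cases at once via a signed perturbation $\delta=\pm\varepsilon$ and making explicit the stationarity $g(\hat{p})=0$, where the paper carries out the two expansions separately.
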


\begin{proof}
Formulas (\ref{eq:total_grad}) are obvious. Applying Taylor's expansion proves formulas (\ref{eq:ce_part_grad}) and (\ref{eq:klrev_part_grad}). For details see appendix \ref{appendix}.
\end{proof}

Comparison of right-hand sides of formulas (\ref{eq:ce_part_grad}) and (\ref{eq:klrev_part_grad}) with (\ref{eq:total_grad}) reveals that the steps in our fine-tuning scheme are much smaller (by a factor of $\frac{\varepsilon}{\hat{p}}$) than if we were fine-tuning with just cross-entropy loss. It explains why for the same learning rates probabilities oscillate much more aggressively when we fine-tune with cross-entropy loss than when we perform fine-tuning with discriminator.

The proposition also implies that when we fine-tune with discriminator, we take a step in the direction $\textbf{v}$ towards increasing $q_{\theta}$ only when $q_{\theta}<\hat{p}$, and when $q_{\theta}>\hat{p}$ we take a step in the opposite direction $-\textbf{v}$ towards decreasing $q_{\theta}$ even though it is probability of the true next word. When we consider standard fine-tuning with cross-entropy loss, we always take a gradient step in the direction $\textbf{v}$ to increase probability $q_{\theta}$ of the true word. So, we increase true word probability even if it is already big, which may sometimes be unreasonable and lead to overestimation of frequent words.

To sum it up, if in the end of training language model approached a saddle point where imbalance between frequent and rare words prediction is too high and further fine-tuning with cross-entropy loss is impossible, fine-tuning with discriminator can help to escape from this point to the regions where the imbalance is lower.

The most vulnerable point in the proposed scheme is that we tend to make gradient steps in the direction of the estimation calculated using discriminator, and there is no guarantee that this estimation is accurate -- it depends on the discriminator's quality. For example, the perturbed model assigns probability of order $10^{-8}$ to the chosen word. Probability estimation obtained using discriminator is of order $10^{-6}$, but the true probability is of order $10^{-3}$. If we could train discriminator better, we would be able to achieve better results.

\section{Experiments}\label{experiments}

\subsection{Penn Treebank}\label{exp_ptb}

\begin{figure}[h]
\begin{minipage}[h]{0.49\linewidth}
\center{\includegraphics[width=1.0\linewidth]{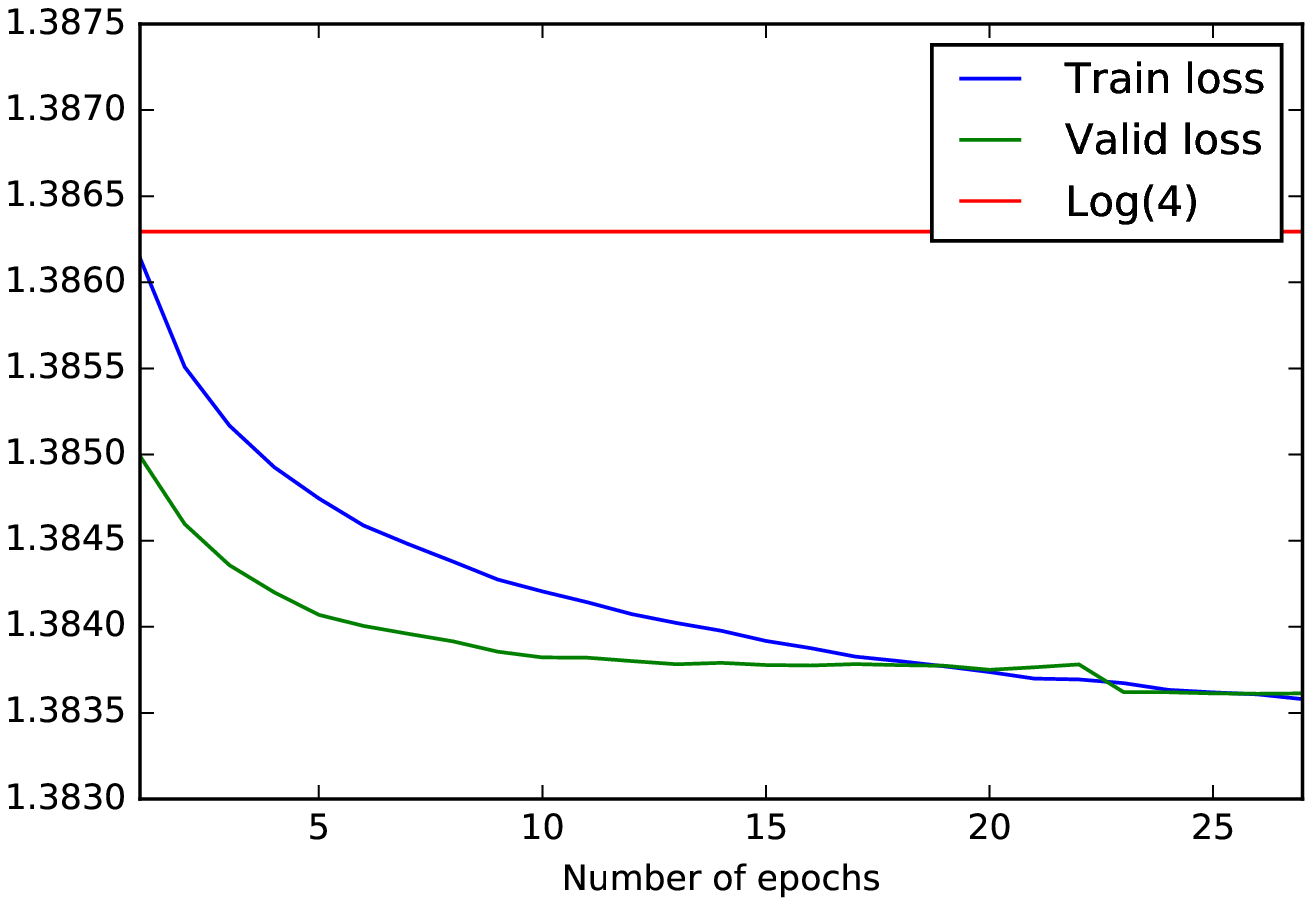} \\ }
\end{minipage}
\hfill
\begin{minipage}[h]{0.49\linewidth}
\center{\includegraphics[width=1.0\linewidth]{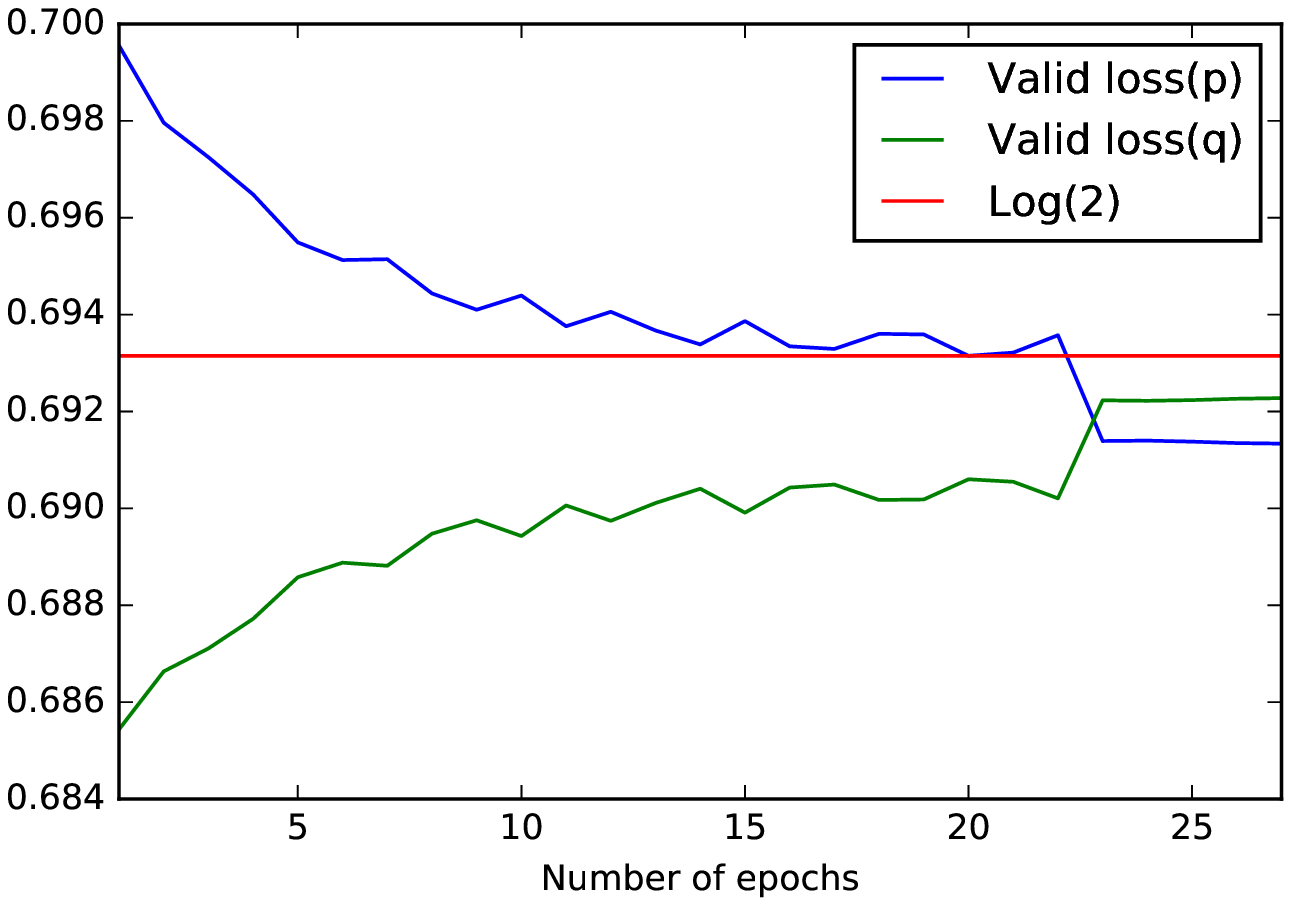} \\ }
\end{minipage}
\caption{Discriminator training curves (PTB). Marks ($p$) and ($q$) on the right plot mean the same as corresponding superscripts in (\ref{eq:disc_loss_detail}).}
\label{pic:Disc_train_curves}
\end{figure}

We started with current state-of-the-art language model based on the idea of direct output connection proposed by \cite{DOC_LM} which is essentially a generalization of the idea of mixture of softmaxes (\cite{MoS}). All parameters and hyperparameters were taken from the paper that introduces direct output connection architecture. Discriminator's architecture is the same as that of the language model. Training scheme is also the same -- we train discriminator with Stochastic Gradient Descent (SGD) and switch to Averaged Stochastic Gradient Descent (ASGD) when the quality ceases to improve.

Training curves on Figure \ref{pic:Disc_train_curves} show that discriminator training is very stable and fast -- it takes around $30$ epochs to converge. This is of no surprise since discriminator training is in fact training a language model with NCE loss which is known to be stable (see (\cite{NCE_LM})). On the figure there is a line corresponding to loss equal to $\log{4}$. This is the best loss discriminator can have in case the initial language model is optimal (i.e. $q_{0}\equiv p$). Since language model is not, discriminator can obtain lower level of loss which we actually observe.
Also, we noted that it is not so important whether we apply dropout and other regularization techniques used in the original paper ($L_{2}$-redularization on activations) or not -- the results differ insignificantly. Discriminator training dependence on learning rate is also mild.

We found out that the best performance on validation set was when we applied both dropout and all other regularization techniques and set learning rate to $1.0$ (with gradient clipping at $0.25$), so for further experiments with fine-tuning we chose this discriminator.

\begin{table}[t]
\caption{Quality after fine-tuning (PTB). ``Without discriminator'' means that for these estimators we took $r(w|c)\equiv 0.5$.}
\begin{center}
\begin{tabular}{|c|c|c|c|c|c|c|}
\hline
Fine-tuning Option &Val. loss &Val. ppl. &Val. rev. KL-div. &Test ppl.\\ 
\hline
Initial model &$3.99500$ &$54.31$ &$0.00024$ &$52.45$\\
\hline
Without dropout, with discriminator &$4.00248$ &$\textbf{53.90}$ &$0.01527$ &$\textbf{52.12}$\\
\hline
With dropout, with discriminator & \multicolumn{4}{c|}{both total loss and perplexity increase on validation set}\\
\hline
Without dropout, without discriminator &$4.01317$ &$54.16$ &$0.02126$ &$52.38$\\
\hline
With dropout, without discriminator & \multicolumn{4}{c|}{both total loss and perplexity increase on validation set}\\
\hline
\end{tabular}
\end{center}
\label{tab:ftd_loss_ptb}
\end{table}

The results of fine-tuning with the procedure proposed in section \ref{finetuner_general} are given in Table \ref{tab:ftd_loss_ptb}. It is possible to fine-tune the language model with or without dropout (in the first case the same dropout probabilities are chosen as those from training the initial language model). To show that discriminator is an important part of our fine-tuning algorithm, we made experiment with ``random'' discriminator (i.e. when in formula (\ref{eq:disc_expression}) we have $\frac{1}{2}$ for all words and contexts). Unlike discriminator training, we found out that learning rate has some impact on the performance of fine-tuned model. As for dropout, Table \ref{tab:ftd_loss_ptb} demonstrates that it definitely has negative effect on fine-tuning. Perhaps the reason is that reverse KL-divergence estimation in (\ref{eq:ftd_loss_eval}) is much more sensitive to the variance that is introduced into the whole scheme by dropout than cross-entropy estimation.

All models from Table \ref{tab:ftd_loss_ptb} were trained with learning rate $1.0$ (with gradient clipping at $0.25$) that was chosen because it minimized validation perplexity for the best of fine-tuning options mentioned in the table. It took only $15$ epochs to fine-tune the language model in the setting that resulted in state-of-the-art quality. This is much faster compared to fine-tuning with ASGD in (\cite{DOC_LM}) and (\cite{MoS}) where this process took hundreds of epochs. Moreover, a substantial decrease in perplexity is reached already after the first epoch of fine-tuning.

\subsection{WikiText-2}

\begin{table}[t]
\centering
\caption{Perplexity on PTB and WikiText-2.}
\begin{tabular}{|c||c|c|c|c|}
\hline
\multirow{2}{*}{Model} 
& \multicolumn{2}{c|}{\begin{tabular}[c]{@{}c@{}}Penn Treebank\end{tabular}} & \multicolumn{2}{c|}{\begin{tabular}[c]{@{}c@{}}WikiText-2\end{tabular}} \\ \cline{2-5}
& Valid & Test & Valid  & Test\\ 
\hline
Mixture of softmaxes, original paper & $56.54$ & $54.44$ & $63.88$ & $61.45$                                                                                \\ \hline
Mixture of softmaxes, results of our training & $56.77$ & $54.66$ & $64.35$ & $61.90$
\\ \hline
Mixture of softmaxes, our fine-tuning & $55.75$ & $\textbf{53.91}$ & $63.52$ & $\textbf{61.19}$
\\ \hline
Mixture of softmaxes, original paper + dyn. eval. & $48.33$ & $47.69$ & $42.41$ & $\textbf{40.68}$                                                                                
\\ \hline
Mixture of softmaxes, our fine-tuning + dyn. eval. & $48.25$ & $\textbf{47.68}$ & $42.83$ & $40.96$
\\ \hline\hline
Direct output connection, original paper & $54.12$ & $52.38$ & $60.29$ & $58.03$
\\ \hline
Direct output connection, results of our training & $54.31$ & $52.45$ & $--$ & $--$
\\ \hline
Direct output connection, our fine-tuning & $53.90$ & $\textbf{52.12}$ & $--$ & $--$
\\ \hline
\end{tabular}
\label{tab:SOTA}
\end{table}

The model utilizing mixture of softmaxes idea (\cite{MoS}) was used for the experiments on WikiText-2 (\cite{WT2}). We chose the same parameters and hyperparameters that were used in the paper. However, we have to note that we couldn't reproduce results from (\cite{MoS}): the results of our training were slightly worse and perplexity differed by $0.2-0.4$ on PTB and WikiText-2 (see Table~\ref{tab:SOTA}).

Discriminator training and language model fine-tuning took $25$ and $8$ epochs correspondingly. Learning rates for both were set to $1.0$ because they resulted in best performance on validation set. These were the only hyperparameters that we had to tune. Similarly to PTB, discriminator was trained with dropout (the same dropout rates as those used for training initial model). Fine-tuning was performed without dropout, since we've seen in section \ref{exp_ptb} that this is a necessary condition for our algorithm to work well.

Perplexity of the fine-tuned models (both for PTB and WikiText-2) are shown in Table \ref{tab:SOTA}. Dynamic evaluation parameters for models based on mixture of softmaxes were tuned using grid search on validation set. It's worth noting that perplexity gain in dynamic evaluation mode (\cite{DynEval}) is very mild if any.

\subsection{Large-scale experiment}

Large-scale dataset we trained our model on consisted of $4$Gb of English texts gathered from the Internet (blogs, news articles, forums, etc.) Vocabulary consisted of top $100$k words from training corpus. Validation and test sets' sizes were around $100$Mb. We chose a very simple recurrent model -- a single-layer LSTM with $500$ hidden units. We used differentiated softmax (\cite{DifSoftmax}) with output embeddings dimension varying from $150$ for high-frequency words to $16$ for low-frequency words. Neither dropout nor any other regularization technique was applied during training with SGD since data size was enough for good generalization.

The initial model perplexity $129.08$ went down to $127.46$ as a result of fine-tuning with discriminator. Discriminator training and model fine-tuning took $60$ and $30$ epochs correspondingly. All data was split into $20$Mb parts and by epoch here we mean full pass through such part, i.e. the model was fine-tuned by one pass through approximately $600$Mb of data.

\section{Conclusion}
We have presented a novel optimization trick that makes it possible to fine-tune language models at any scale. The key idea is to modify cross-entropy loss by adding reverse KL-divergence that is estimated using discriminator that has to be trained first. It enables model to decide whether probability of current word is underestimated or overestimated, and to choose the direction of gradient step depending on that. As a result, rare words probabilities which are usually underestimated increase without significant damage to frequent words probabilities. This approach allowed us to reach state-of-the-art quality in language modeling task on a popular benchmark Penn Treebank. Apart from being effective, our approach is also fast and easy to use since it requires almost no hyperparameter tuning.

\bibliography{finetune_paper}
\bibliographystyle{common_style}

\appendix
\section{Details of experiments on synthetic data}\label{artdataset}

Vocabulary consisted of $1000$ words and an additional token that every sentence started with. All sentences consisted of $10$ words excluding starting token. Validation and test sets consisted of $10$k sentences while training set size was $80$k sentences. These sets were sampled from trigram distribution.

For natural numbers $i, j, k$ not exceeding $1000$ probability of each trigram $(i, j, k)$ was proportional to $\alpha(j)\cdot\beta(i, j)\cdot\beta(k, j)$:
\begin{equation*}
\alpha(j)=
\begin{cases}
\frac{1}{50}\cdot\frac{\log{(j+1)}}{\log{51}} \text{ \quad for $j\leq 50$}\\
\frac{1}{j} \text{ \qquad \qquad \qquad otherwise}
\end{cases}
\end{equation*}
$$\beta(i,j)=(i\cdot(|i-j+\xi|+1))^{-\gamma}$$
where $\xi$ is a random variable with uniform distribution on the set $\{0, 1, 2, 3, 4, 5\}$ sampled independently for each triple $(i, j, k)$ and $\gamma=0.75$.

Such distribution leads to separation into frequent and rare words. Top $50$ most frequent words constitute $70.5\%$ of training corpus, the next $100$ words constitute $14.7\%$ (the chosen word belongs to this group), and the remaining $850$ words ratio is $14.8\%$ of training corpus.

Two-layer LSTM network with $256$ hidden untis and the same embedding dimensionality was used both for language model and discriminator. Both language model and discriminator were trained with SGD with batches containing $1024$ sentences and learning rates equal to $1.0$ (with gradient clipping at $1.0$). Learning rates were multiplied by $0.1$ each time validation perplexity showed no improvement. Neither dropout nor any other regularization technique was applied.

\section{Proposition proof}\label{appendix}
Proof of formula (\ref{eq:ce_part_grad}) (recall that $q_{\theta}=\hat{p}-\varepsilon$):
$$\frac{\partial}{\partial q_{\theta}}\left(-\log{q_{\theta}}+\frac{q_{\theta}}{\hat{p}}\log{\frac{q_{\theta}}{\hat{p}}}\right)=\left(-\frac{1}{q_{\theta}}+\frac{1}{\hat{p}}\left(1+\log{\frac{q_{\theta}}{\hat{p}}}\right)\right)=-\left(\frac{1}{\hat{p}-\varepsilon}-\frac{1}{\hat{p}}\right)+\frac{1}{\hat{p}}\log{\left(1-\frac{\varepsilon}{\hat{p}}\right)}$$
$$=-\frac{\varepsilon}{\hat{p}^{2}}\left(1-\frac{\varepsilon}{\hat{p}}\right)^{-1}+\frac{1}{\hat{p}}\left(-\frac{\varepsilon}{\hat{p}}-\frac{\varepsilon^{2}}{2\hat{p}^{2}}+o(\varepsilon^{2})\right)=-\frac{\varepsilon}{\hat{p}^{2}}\left(1+\frac{\varepsilon}{\hat{p}}+o(\varepsilon)\right)-\frac{\varepsilon}{\hat{p}^{2}}-\frac{\varepsilon^{2}}{2\hat{p}^{3}}+o(\varepsilon^{2})$$
$$=\left(-\frac{2\varepsilon}{\hat{p}^{2}}-\frac{3}{2}\frac{\varepsilon^{2}}{\hat{p}^{3}}+o(\varepsilon^{2})\right)$$
Proof of formula (\ref{eq:klrev_part_grad}) (recall that $q_{\theta}=\hat{p}+\varepsilon$):
$$\frac{\partial}{\partial q_{\theta}}\left(-\log{q_{\theta}}+\frac{q_{\theta}}{\hat{p}}\log{\frac{q_{\theta}}{\hat{p}}}\right)=\left(-\frac{1}{q_{\theta}}+\frac{1}{\hat{p}}\left(1+\log{\frac{q_{\theta}}{\hat{p}}}\right)\right)=\left(\frac{1}{\hat{p}}-\frac{1}{\hat{p}+\varepsilon}\right)+\frac{1}{\hat{p}}\log{\left(1+\frac{\varepsilon}{\hat{p}}\right)}$$
$$=\frac{\varepsilon}{\hat{p}^{2}}\left(1+\frac{\varepsilon}{\hat{p}}\right)^{-1}+\frac{1}{\hat{p}}\left(\frac{\varepsilon}{\hat{p}}-\frac{\varepsilon^{2}}{2\hat{p}^{2}}+o(\varepsilon^{2})\right)=\frac{\varepsilon}{\hat{p}^{2}}\left(1-\frac{\varepsilon}{\hat{p}}+o(\varepsilon)\right)+\frac{\varepsilon}{\hat{p}^{2}}-\frac{\varepsilon^{2}}{2\hat{p}^{3}}+o(\varepsilon^{2})$$
$$=\left(\frac{2\varepsilon}{\hat{p}^{2}}-\frac{3}{2}\frac{\varepsilon^{2}}{\hat{p}^{3}}+o(\varepsilon^{2})\right)$$

\end{document}